\newtheorem{thm}{Theorem}
\newtheorem{assumption}{Assumption}
\newtheorem{remark}{Remark}
\newtheorem{prop}{Proposition}
\newcommand{\vect}[1]{\boldsymbol{#1}}
	\title{\LARGE \bf Learning to Control Using Image Feedback}	
    \author{R. Krishnan$^1$, N. Vignesh$^2$, and S. Jagannathan$^3$
\thanks{R. Krishnan is with the Mathematics and. Computer Science Division, Argonne National Laboratory, IL, USA. N. Vignesh is with the AI Institute and the Dept. of Computer Science and Engineering, University of South Carolina, Columbia, SC, USA. S. Jagannathan is with the Dept. of Electrical and Computer Engineering, Missouri University of Science and Technology, Rolla, MO, USA. kraghavan@anl.gov, vignar@sc.edu, and sarangap@mst.edu}}
\begin{document}
	\maketitle
% 	\thispagestyle{empty}
% \pagestyle{empty}
% 	\IEEEdisplaynontitleabstractindextext
% 	\IEEEpeerreviewmaketitle
%===============================================================================
\begin{abstract}
Learning to control complex systems using non-traditional feedback, e.g., in the form of snapshot images, is an important task encountered in diverse domains such as robotics, neuroscience, and biology (cellular systems). In this paper, we present a two neural-network (NN)-based feedback control framework to design control policies for systems that generate feedback in the form of images. In particular, we develop a deep $Q$-network (DQN)-driven learning control strategy to synthesize a sequence of control inputs from snapshot images that encode the information pertaining to the current state and control action of the system. Further, to train the networks we employ a direct error-driven learning (EDL) approach that utilizes a set of linear transformations of the NN training error to update the NN weights in each layer. We verify the efficacy of the proposed control strategy using numerical examples. 
\end{abstract}
%===============================================================================
\section{Introduction}\label{sec:Introduction}
%===============================================================================
Learning to control complex systems using feedback data is an active research topic in systems and control theory. Approaches ranging from traditional adaptive control \cite{narendra2012stable} to more modern techniques such as adaptive dynamic programming (ADP) \cite{bertsekas2015value,lewis2012reinforcement} /reinforcement learning (RL) \cite{sutton2018reinforcement} that employ artificial neural networks (ANNs) have been proposed to steer systems without relying on complete model information. These control approaches use either the state or output feedback data generated by the system to learn policies for steering the system as desired. In many emerging systems such as robotic swarms, neural systems, and cellular populations, the feedback information is only available in the form of snapshot images~\cite{yu2020learning,mnih2015human}. Consequently, commonly used learning-based control strategies including ADP/RL techniques have to be upgraded to incorporate feedback data in the form of images and synthesize control policies for steering the system as desired, especially when no desired trajectory or (explicit) state measurements are available. %Typically, data-driven control strategies are proposed to synthesize control signals when the system dynamics are uncertain. Furthermore, when the system state information is not fully measured, an output feedback controller incorporating observers are commonly proposed. 

In this context, recent advancements in the deep $Q$ learning, spurred by \cite{mnih2015human}, have demonstrated the capacity of RL models employing deep neural networks (DNNs) to perform such complex decision making tasks. In an RL model, an agent or a team of agents learn to achieve the desired objective by learning the best sequence of actions from any situation/state \cite{ref_extending_Iwata}  using experiences (data) collected over time. The validity of such an RL model is typically quantified as a function of the temporal difference (TD) error \cite{sutton2018reinforcement}, which the agent seeks to reduce during learning.
Specifically, the deep $Q$-network (DQN) introduced in \cite{mnih2015human} was the first approach to design a deep RL model in which the TD error was used to learn the weights of a $Q-$network, a parameterized DNN, by mimicking a supervised learning procedure, where a clone of the $Q-$ network~(referred as target network) was used to generate the target values. In \cite{mnih2015human}, the target network was periodically updated with the latest weights of the $Q-$network, providing a stable training regime that mitigated short-term oscillations induced by the moving targets in a standard RL algorithm. \emph{In this context, the problem of designing a learning controller boils down to learning a $Q$-function with a parameterised DNN.}

%The learning scheme can then be interpreted as one which strives to successively improve the target network such that it replicates an optimal $Q-$function. The efficiency of such learning architecture is pivoted on: $(i)$ the efficient use of experiences; and $(ii)$ sufficient exploration of the parameter or weight space. 

In this paper, we introduce a DNN-based learning strategy to synthesize feedback policies for steering a system generating feedback in the form of images. To this end, we adopt and expand the DQN framework in \cite{mnih2015human} and propose a dual NN-driven exploratory learning approach for improving the learning time and efficiency of the DQN. In the proposed approach, two DNNs, independently initialized, take the role of the target and the $Q-$network~(actor) alternatively in successive iterations. Furthermore, we propose training rules to update the weights of the DNN using a direct error-driven learning~(EDL) strategy, which introduces a set of linear transformations that help compute training signals corresponding to each layer. We refer to this dual NN driven alternative update mechanism as the \emph{cooperative update strategy} (coop). We demonstrate the convergence of EDL analytically, and show faster convergence for EDL in comparison to standard DQN, making it a feasible candidate for developing online control strategies in emerging applications.

%==========================================================
\section{Related Works}
%===========================================================
There has been a growing interest in combining deep learning with RL and control to enable agents navigating complex and high-dimensional state-space to perform control tasks with non-traditional sensory data. For instance, a neural fitted Q-learning (NFQ) was introduced in \cite{NFQ_05} using resillient backpropagation algorithm to train the $Q$-network \cite{lange_10_deep}. More recently, the work in \cite{mnih2015human} applied DRL to learn the action-value function through the stochastic gradient-based update rule. An experience replay (ER) strategy~\cite{ref_self_improve_Lin, ref_batch_rein_learn_Kalyanakrishnan, ref_Qlearn_exper_replay_Pieters} was also employed in \cite{mnih2015human} to improve efficiency and reduce short term oscillations that are prevalent in RL algorithms~\cite{ref_synth_Leonetti, ref_general_fact_domain_Hester}. However, with the ER strategy, the memory required to store the experiences may be limited or expensive in many applications~\cite{ref_backward_Qlearning_Wang, ref_Qlearn_exper_replay_Pieters}. %Therefore, despite successful and intelligent sampling strategies~\cite{ref_backward_Qlearning_Wang, ref_Qlearn_exper_replay_Pieters, dist_prior}, initially collected useful experiences~(experiences near the initial states) could be discarded due to memory constraints. On the other hand, the initial experiences may introduce bias into the learning procedure~\cite{mnih2015human}. Although, this problem has been addressed in \cite{doubleDQN, dueling}, 
Moreover, despite several successful and intelligent sampling strategies to improve implementation efficiency, many RL approaches are still sample-inefficient \cite{jiang2020model}.

On the other hand, an efficient weight update rule is necessary~\cite{ref_fmrq_multi_Zhang, ref_moo_mdp_Silva} to avoid slow learning due to vanishing gradient problem~\cite{Ragh2017error,hardt2016train} and lack of efficient exploration common with back-propagation/stochastic gradient descent (SGD) algorithm~\cite{mnih2015human}. More recently,  a direct error driven learning (EDL) approach was introduced in \cite{Ragh2017error} to address the deficiencies of SGD. EDL is designed to train DNNs \cite{Ragh2017error} within the supervised learning setup where explicit targets are defined at each layer by performing a linear transformation of the error. Such a transformation allowed the EDL mechanism to avoid the vanishing gradient problem by side-stepping the layer-wise activation functions that introduce this issue. However, the EDL approach has not been utilized in DRL applications and a theoretical analysis of the EDL algorithm is a missing component in \cite{Ragh2017error}. 

In this work, we introduce a dual NN driven exploratory learning approach. In contrast with the existing works that incorporate two NNs \cite{doubleDQN, dueling}, we use two independently initialized NNs to cooperatively learn from each others' experience, mitigating potential bias due to initial experiences or network initialization, and thus improving sample efficiency. Furthermore, EDL provided an inherent exploration strategy that improved convergence with enhanced exploration of the parameter space in contrast with typical SGD~\cite{Ragh2017error,hardt2016train}.

% =======================BACKGROUND========================================
\section{Preliminaries}\label{sec: background}
% ==============================================================================
% ==============================================================================
% We consider an RL problem, where an agent interacts with an environment through a sequence of actions and rewards. With each action taken by the agent, the internal states of the environment changes as $\vect{x}(k+1) \sim T(\vect{x}(k), \vect{u}(k), \vect{\omega}(k))$, where $\vect{x}(k) \sim p( \mathcal{S} )$ being the state of the system at the $k^{th}$ sampling instant, $p( \mathcal{S} )$ describes the distribution on the states, $\vect{u}(k)\in \mathcal{A}$ (actions) denotes the input to the system, the internal fluctuations, e.g., noise, are denoted by $\vect{\omega}(k) \in \Omega$. Here $\mathcal{S}$ denotes the states space, $\mathcal{A}$ denotes the action space, $\Omega$ is a measurable space, and the state-transition function is given by  ${T}:\mathcalS}\times\mathcal{A}\times \Omega\to\mathcal{S}$. The action $u(k)$ is, in general, based on a policy $\pi:\mathcal{S}\to \mathcal{A}$, and as the agent takes an action, the internal state of the environment changes from $x(k)$ to $x(k+1)$, and in this process, the agent receives an external scalar reward $r(k+1) \in  \mathcal{R}$ from the environment. The tuple $( x(k), u(k),  x(k+1), r(k+1))$ is collected as an experience by the RL agent to learn the $Q$-function during its interaction with the environment at time instant $k$.

We consider applications where a sequence of images or snapshots are obtained from the system and must be used to choose a control input to the system from a discrete set of feasible actions. Therefore, consider an RL problem, where an agent interacts with an environment~(simulates the system) through a sequence of control actions and rewards. With each control action $\vect{u}(k)$ taken by the agent, the internal states of the environment change. Let $\vect{x}(k) \sim p( \mathcal{S} )$ be the state of the environment at the $k^{th}$ sampling instant and $p( \mathcal{S} )$ be the distribution on the states with the state space denoted by $\mathcal{S}$. Define $\mathcal{A}$ to denote the set of feasible actions,  $\mathcal{R}$ to denote the set of feasible rewards, and $\Omega$ to denote a measurable space.  The action $u(k)$ is, in general, based on a control policy $\pi:\mathcal{S}\to \mathcal{A}$, and as the agent takes an action, the internal state of the environment changes from $x(k)$ to $x(k+1)$ such that  $\vect{x}(k+1) \sim T(\vect{x}(k), \vect{u}(k), \vect{\omega}(k)),$ where  ${T}:\mathcal{S}\times\mathcal{A}\times \Omega\to\mathcal{S}$ is the state transition function with $\vect{\omega}(k) \in \Omega$ being the internal fluctuations, e.g., noise. In this process, the agent receives an external scalar reward $r(k+1) \in  \mathcal{R}$ from the environment. The tuple $( x(k), u(k),  x(k+1), r(k+1))$ is collected as an experience by the RL agent to learn the $Q$-function during its interaction with the environment at time instant $k$.

In our work, we employ NNs to approximate the unknown $Q-$function. To this end, we define a NN with $d$ layers, denoted with an ideal parametric map, $y(\vect{z}; \vect{\theta})$, where $\vect{z}$ is the input and $\vect{\theta} = [\vect{W}^{(1)} \cdots \vect{W}^{(d)}]$ is the collection of all the ideal weights for this approximation. Each element in $\vect{\theta}$, i.e. $\vect{W}^{(i)}, i = 1, \cdots d$ denotes the weights of the DNN at layer $i.$ Let $\varepsilon$ denote bounded approximation error~\cite{paper3_approximation} incurred  by the ideal map and $f^{(i)}$, for $i = 1, \ldots, d$, denote the layer-wise activation functions corresponding to the $d$ layers. To denote the estimated NN, we introduce the following set of notations. The estimated weights are denoted by $\hat{\vect{\theta}}  = [\hat{\vect{W}}^{(1)} \cdots \hat{\vect{W}}^{(d)}]$, and the estimate of the function $y(\vect{z},\vect{\theta})$ is given by $\hat{y}( \vect{z}; \hat{\vect{\theta}} ),$ where the symbol $\hat{(\cdot)}$ is used to denote estimated quantities. 

Our network is designed to take as input, the images, and provide as output the $Q-$function value corresponding to each feasible action to the system as in \cite{mnih2015human}.  We will then employ a greedy policy and choose the control action that obtains the largest $Q-$function value. The learning objective is to learn the optimal $Q-$function. We denote the reward function as $r(\vect{x}(k), \vect{u}(k))$, a function of both the state and the action or control inputs. For a given (state, action) pair, $(\vect{x}(k), \vect{u}(k))$ and an action policy $\pi,$ the $Q$-function given as $  Q^{\pi}(\vect{x}(k),\vect{u}(k)) =\sum_{k = 0}^{\infty} \gamma^{k} r(\vect{x}(k), \vect{u}(k)),$ where $\gamma \in (0,1)$ is the discount factor. Additionally, we use $\mathbb{N}$ and $\mathbb{R}$ for denoting the set of natural and real numbers, respectively. Furthermore, we use $\|.\|$ to denote Euclidean/ Frobenius norm for vectors/matrices.  

%===============================================================================
\section{Methods}
\label{sec:Methods}
%================================================================================
In the following, we first introduce the optimization problem and the overall learning architecture. We then present the details of the coop learning scheme.
%==============================
\subsection{RL Approximation of $Q$-values} %and rules for training the DNN}
% =================================================
In our problem setup, feedback from the system is received by the RL agent in the form of images. Then, the RL agent must learn a map between the image inputs and the $Q$ values corresponding to the set of available actions. To update the $Q$ values, we use the Bellman's equation \cite{sutton2018reinforcement} provided as
    \begin{equation}
    	\begin{aligned}\label{eq:bellman}
    	   Q^{*}(\vect{x}(k), \vect{u}(k))  = max_{u(k+1)} (r(\vect{x}(k), \vect{u}(k)) \nonumber \\ + \gamma^{k} (Q^{*}(\vect{x}(k+1), \vect{u}(k+1)) ),
    	 \end{aligned}
    \end{equation}
where $Q^*$ refers to the optimal $Q$-function with $\rho(\mathcal{A})$ denoting the distribution over $\mathcal{A}$. A greedy policy is given as $\vect{u}^{*}(\vect{x}(k))  = argmax_{\vect{u}(k)} ( Q^{\rho}(\vect{x}(k), \vect{u}(k)) )$ and $\rho$ refers to  $\rho(\mathcal{A})$.  In this work, we approximate the $Q$-function with a generic parametric map, $y(\vect{x}(k), \vect{u}(k) ; \vect{\theta})$ with an ideal set of parameters $\vect{\theta}$ such that ${Q}^*(\vect{x}(k), \vect{u}(k)) = y(\vect{x}(k), \vect{u}(k) ; \vect{\theta})$~(hitherto denoted as $y$). We want to find $\hat{y}$ with $\hat{\vect{\theta}}(k)$ for $y$ such that $y$ is approximated by DNNs. The cost is defined as the expected value over all the states and all the actions such that $ \mathbb{E}_{\vect{x}(k) \sim p(\mathcal{S}), \vect{u}(k) \sim \rho(\mathcal{A}) }[ J(\hat{\vect{\theta}}(k) )],$ where
    \begin{align}
    J(\hat{\vect{\theta}}(k) ) =  \big[ \underbrace{\frac{1}{2}(y(\vect{x}(k), \vect{u}(k) ; \vect{\theta})- \hat{y}(\vect{x}(k), \vect{u}(k) ; \hat{\vect{\theta}}))^{2}}_{ \text{Empirical Cost}} \nonumber \\ + \underbrace{\varepsilon^{2} + \varepsilon^{T}( y(\vect{x}(k), \vect{u}(k) ; \vect{\theta}) - \hat{y}(\vect{x}(k), \vect{u}(k) ; \hat{\vect{\theta}}))}_{\text{Approximation Error Cost}} \big],
  \end{align}
with the target $y(\vect{x}(k), \vect{u}(k) ; \vect{\theta}) = r(\vect{x}(k), \vect{u}(k))  + \gamma^{k}{\hat{Q}^{*}(\vect{x}(k+1), \vect{u}(k+1))}$, coming from the Bellman equation with $\hat{Q}^{*}$ denoting the approximation of the optimal $Q$-value obtained by extrapolating the current $Q$-function estimate. Only the empirical cost, the first term, can be minimized as the other two terms depend on the choice of the NN, the available data, and the problem complexity, which is typically compensated via the use of regularization approaches \cite{hardt2016train,lewis1998neural}. Therefore, we minimize the empirical cost as 
    \begin{equation}
    	\begin{aligned}
    	   \hat{\vect{\theta}}(k) = arg~min_{\bar{\vect{\theta}}(k)} \mathbb{E}_{\vect{x}(k) \in \mathcal{S}, \vect{u}(k) \sim \rho(\mathcal{A}) } \\ \big[ J_E(\vect{x}(k), \vect{u}(k),\hat{y}(\vect{x}(k), \vect{u}(k) ; \hat{\vect{\theta}}))\big], & 
    	\end{aligned}
    	 \label{eq_main_opt}
    \end{equation}
with $J_E(\vect{x}(k), \vect{u}(k) , \hat{y}( \vect{x}(k), \vect{u}(k); \hat{\vect{\theta}}(k)) )= \frac{1}{2} \big[r_{k} + \gamma^{k}\hat{Q}_{k+1}^{*} - \hat{y}_{k} \big]^2 = \frac{1}{2} \big[ \epsilon_{k} \big]^2$, with $r(\vect{x}(k), \vect{u}(k))$ denoted by $r_{k}$, {$\hat{Q}^{*}(\vect{x}(k+1), \vect{u}(k+1))$} by $\hat{Q}_{k+1}^{*}$ and $\hat{y}(\vect{x}(k), \vect{u}(k) ; \hat{\vect{\theta}})$ by $\hat{y}_{k}$. The empirical cost~(denoted hitherto as $J_E$ for notational simplicity) which we seek to minimize is the squared temporal difference (TD) error ($\epsilon_{k}$). 

In the following, we describe a TD error driven approach to update the NN weights. Specifically, we sketch the error driven learning rules first, and then, provide the algorithm for the proposed strategy synthesize a sequence of controls.% with TD error driven learning approach. 
% =====================================
\subsection{Error driven TD learning}
% =====================================
In the earlier section, the problem of obtaining a control policy was formulated as an optimization problem where we attempt to learn the optimal $Q$ function through NNs by minimizing $J_E$. To learn the parameters of the optimal $Q$ function, consider the update rule as
\begin{align} \label{eq:weight_update_gen}
	\hat{\vect{\theta}} (k+1) = \hat{\vect{\theta}} (k) + \alpha \vect{\Delta}(k), 
\end{align}
where $\alpha >0$ is the learning rate, $k$ is the sampling instant, and $\vect{\Delta}(k)$ is the parameter adjustment. To simplify notations, we switch to a subscript notation instead of expressing the sampling instants in parenthesis so that \eqref{eq:weight_update_gen} is rewritten as $\hat{\vect{\theta}}_{k+1} = \hat{\vect{\theta}}_{k} + \alpha \vect{\Delta}_{k}$. From here on, for brevity, the notation of the expected value operator is suppressed, and we refer to the expected value of the cost as just the cost. Next, add a regularization term to the cost function to get 
\begin{align}
    {{H}(\hat{\vect{\theta}}_k)}= \big[ J_{E}(\hat{\vect{\theta}}_{k})  +   \sum_{i = 1}^{d} \lambda^{(i)} R_{k}(\hat{\vect{W}}^{(i)}_{k})\big],
    \label{eq:orignal}
\end{align}
where $\lambda^{(i)} > 0$ is the decay coefficient and  $R_{k}$ is the regularization function applied on $\hat{\vect{W}}^{(i)}_{k}$ with $J_E(\hat{\vect{\theta}}_{k})$ being used in place of $J_E(\vect{x}(k), \vect{u}(k); \hat{\vect{\theta}}(k)) $ for notational simplicity. The most common way of obtaining the control in this context is to backpropagate the TD error and we write the weight adjustment at each $k$ as
 \begin{align} \label{eq11}     \vect{\Delta}^{(i)}_{k}  = -\big[ \nabla_{\hat{\vect{W}}^{(i)}_{k}}   H(\hat{\vect{\theta}}_{k})  \big]  = -\big[ {\vect{\delta}}^{(i)}_{k}  + \lambda^{(i)}  \nabla_{\hat{\vect{W}}^{(i)}_{k}} R(\hat{\vect{W}}^{(i)}_{k}) \big],\end{align}   
where the term $\nabla_{\hat{\vect{W}}^{(i)}_k}(\cdot)$ denotes the gradient of $(\cdot)$ with respect to the NN weight  $\hat{\vect{W}}^{(i)}_{k}$, the second component depends on the choice of $R(\hat{\vect{W}}^{(i)}_{k})$, and $\vect{\delta}^{(i)}_{k}$  is obtained by applying the chain rule to compute the gradient of the cost with respect to the NN weights. A generalized expression for ${\vect{\delta}}^{(i)}_{k} $ can be derived using the chain rule \cite{Ragh2017error,lecun2015deep} as
\begin{align} \label{eq2_Grad}
      {\vect{\delta}}^{(i)}_{k} = f^{(i-1)}(\vect{x}) \epsilon_{k} \vect{\mathcal{T}}^{(i)} \vect{I}^{(i)}.
\end{align} 
where $ \prod_{j = d}^{i+1}( diag(\nabla f^{(j)}(\vect{x})) \hat{\vect{W}}^{(j)}) diag(\nabla f^{(i)}(\vect{x})) $ is denoted as $\vect{\mathcal{T}}^{(i)}.$ Observe that the error $\epsilon_{k}$  has to propagate through transformation $\vect{\mathcal{T}}^{(i)}$ to impact learning. For a fixed $\vect{x},$ $\vect{\mathcal{T}}^{(i)}$ is a linear approximation (gradient) in the neighborhood of the weights. Since, this approximation is a matrix, the singular vectors of $\vect{\mathcal{T}}^{(i)}$ dictate the directions of learning. The magnitude of this learning depends on the singular values of $\vect{\mathcal{T}}^{(i)}$. Each diagonal element in $\vect{\mathcal{T}}^{(i)}$ is the product of the derivative of the layer-wise activation functions. Therefore, in specific cases, when the derivative of the activation function provides singular values between 0 and 1, the singular values of $\vect{\mathcal{T}}^{(i)}$ would tend towards zero as the number of layers in the deep NN increase. The issue is otherwise known as the vanishing gradients issue~(see \cite{pascanu2013difficulty} and the references therein for details).

Motivated by the success of direct error-driven learning in classification problems \cite{Ragh2017error}, we introduce a user-defined feedback matrix denoted as  $\vect{B}^{(i)}_{k}$ (a hyperparameter), with $\vect{B}^{(i)}_{k} (\vect{B}^{(i)}_{k})^{T}$ being positive definite, to take the role of $\vect{\mathcal{T}}^{(i)}_{k}$ in \eqref{eq2_Grad}. The new feedback, denoted $\vect{\sigma}_{k}^{(i)}$, is then given as
\begin{align} \label{eq_EDL}
    \vect{\sigma}_{k}^{(i)}  = [f^{(i-1)}(\vect{x})]  \epsilon_k \vect{B}^{(i)}_{k}.
\end{align}
With this definition of feedback, the new layer-wise cost  function, $\mathcal{H}^{(i)}(\hat{\vect{W}}^{(i)})$, is defined to be
 \begin{align} \label{eq_cost}
    \mathcal{H}^{(i)}(\hat{\vect{W}}^{(i)}) = \frac{1}{2} \big[tr( ( \vect{\sigma}_{k}^{(i)} )^{T} \vect{P} {\hat{\vect{W}}^{(i)}}_{k} )  + \lambda^{(i)} R({\vect{W}^{(i)}_{k}}) \big], 
\end{align} 
where $tr(.)$ is the trace operator and $\vect{P}$ is  a positive definite symmetric matrix of choice. Note that for each layer $i$, $\vect{\sigma}^{(i)}_{k}$ can be understood as the feedback provided by the overall cost $J(\vect{\theta})$ towards controlling the layer $i$ of the DNN. {Therefore, the layer-wise cost can be interpreted as the minimization of the correlation between $\vect{\sigma}^{(i)}_{k}$ and  $\hat{\vect{W}}^{(i)}_{k}$ under the constraint that $\|\hat{\vect{W}}^{(i)}_{k}\|$ is bounded (due to the regularization).} Finally, the overall cost, $ \mathcal{H}(\hat{\vect{\theta}}(k))$, may be written as the sum of layer-wise costs as 

\begin{align}    \label{total_cost}
    \mathcal{H}(\hat{\vect{\theta}}(k)) = \sum_{i=1}^{d}  \mathcal{H}^{(i)}( \hat{\vect{W}}_{k}^{(i)}).
 \end{align}

With the cost $ \mathcal{H}(\hat{\vect{\theta}}_{k})$ defined in Eq. \eqref{total_cost}, %the optimization problem for the direct error-driven learning can be rewritten as
% \begin{equation}
%   \begin{aligned}
	   %$\hat{\vect{\theta}}(k) = arg~min_{\bar{\vect{\theta}}(k)} \mathbb{E}_{\vect{x}(k) \in \mathcal{S}, \vect{u}(k) \sim \rho(\mathcal{A}) }\big[ \mathcal{H}(\vect{x}(k), \vect{u}(k); \bar{\vect{\theta}}(k)) \big]$.
%   \end{aligned}
% \label{eq_modified_opt}
% \end{equation}
the weight updates for each layer are defined as $\vect{\Delta}^{(i)}_{k} = - \nabla_{\hat{\vect{W}}^{(i)}_{k}} \mathcal{H}(\vect{x}(k), \vect{u}(k); \hat{\vect{\theta}}(k))$, which results in an update rule for each layer similar to Eq. \eqref{eq2_Grad} with $\vect{\delta}^{(i)}_{k}$ replaced by $\vect{\sigma}_k^{(i)}$, and defined as in Eq. \eqref{eq_EDL}.
 \begin{remark}\label{rem:feedbak_matrix}
 The direct error driven learning (EDL)-based weight updates proposed in \cite{Ragh2017error} can be interpreted as an exploratory update rule, as the descent direction in the weight updates are assigned through $B^{(i)}_k$ matrix. To define the matrix $B^{(i)}_k$, we decompose it as $\vect{B}^{(i)}_{k}. = \vect{U}^{(i)}_{k}(\Sigma^{(i)}_{k}+s \times I^{(i)})\vect{V}^{(i)}_{k},$ where $\vect{U}^{(i)}_{k} \Sigma^{(i)}_{k} \vect{V}^{(i)}_{k}$ is the SVD of $\vect{\mathcal{T}}^{(i)}_{k}$ for a fixed $\vect{x}$ with $I^{(i)}$ being an identity matrix of appropriate dimensions and $s$ is a chosen perturbation.
  \end{remark}
 In our coop learning architecture, we use two NNs, and train both of these networks using the EDL update rules just described. Before presenting the coop strategy, we first analyze the EDL rules in the next section. 
 % ==============================================================================
% ======================ANALYSIS=============================================
\subsection{Analysis of EDL for training DNN}\label{sec: analysis}
% ==============================================================================
% ==============================================================================
%In this paper, we approximate the optimization problem associated with \eqref{eq9} using \eqref{eq_modified_opt} and introduce the idea of exploration in the learning approach through the choice of perturbations in the design matrix $\vect{B}^{(i)}$. 
In this section, we analyze different components of the EDL-based learning methodology. All the proofs for the theoretical claims made in this section are given in the supplementary information available online at \cite{Ragh2021supp}. In our analyses, we make the following assumptions. 
\begin{assumption}
    The cost function is a continuous function with continuous derivatives, i.e., $J_{E} \in C^1$. For all $\vect{x}\in \mathcal{S}, \vect{u}\in \mathcal{A}$, and $\hat{\vect{\theta}} \in \Theta$, the empirical cost and its gradient are bounded, i.e., $J_{E}(.,.,\hat{\vect{\theta}}) \leq L$  and $\frac{\partial J_E}{\partial \hat{\vect{\theta}}}\le M$. There exists a positive constant $\vect{\theta}_B$ such that for any $ \hat{\vect{\theta}}, \vect{\theta}$ in the parameter space $\Theta$, we have $max( \|\hat{\vect{\theta}}_k\|,  \|\vect{\theta}\|) \leq \vect{\theta}_B$. The activation functions of the NNs are chosen such that for any $x\in \mathbb{R}$, $\|f^{(i)}(x)\|\le \sqrt{\eta}$, where $\eta$ is the number of neurons in the $i^{th}$ layer. Finally, the perturbations $s \sim \mathcal{N}(0,1)$, where $\mathcal{N}$ denotes the normal distribution.
% \end{enumerate}
\label{ass1}
\end{assumption}

\begin{prop}\label{prop:cost_error}
Under the Assumption 1 and with the feedback matrix $\vect{B}^{(i)}_{k}$ defined as in Remark \ref{rem:feedbak_matrix}, the difference between  original cost in \eqref{eq:orignal} and the new cost in \eqref{total_cost} satisfies the bound given by
\begin{align}
  \| H( \hat{\vect{\theta}}_{k} )-\mathcal{H}( \hat{\vect{\theta}}_{k} )  \| \leq  d \frac{|s|}{2} \|\epsilon_{k}\| W_B \sqrt{\eta} + \|\xi\|,
\end{align} 
when $\vect{P}$ in \eqref{eq_cost} satisfies $\|\vect{P}\| \leq 1$, where $W_B$ and $\xi$ are positive constants. 
\end{prop}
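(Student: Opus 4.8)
The plan is to bound the difference $H(\hat{\vect{\theta}}_k) - \mathcal{H}(\hat{\vect{\theta}}_k)$ by comparing the two costs layer by layer, exploiting the fact that both decompose as sums over the $d$ layers and that the regularization terms $\lambda^{(i)} R(\hat{\vect{W}}^{(i)}_k)$ appear in both. After the regularizers cancel, it remains to control, for each layer $i$, the gap between the contribution of $J_E$ to the original gradient/cost structure and the trace term $\tfrac12 \operatorname{tr}((\vect{\sigma}^{(i)}_k)^T \vect{P}\, \hat{\vect{W}}^{(i)}_k)$. First I would write $\vect{\sigma}^{(i)}_k = f^{(i-1)}(\vect{x})\,\epsilon_k\,\vect{B}^{(i)}_k$ from \eqref{eq_EDL}, substitute the SVD-based decomposition $\vect{B}^{(i)}_k = \vect{U}^{(i)}_k(\Sigma^{(i)}_k + s\,I^{(i)})\vect{V}^{(i)}_k$ from Remark \ref{rem:feedbak_matrix}, and split it as $\vect{B}^{(i)}_k = \vect{\mathcal{T}}^{(i)}_k + s\,\vect{U}^{(i)}_k \vect{V}^{(i)}_k$, since $\vect{U}^{(i)}_k \Sigma^{(i)}_k \vect{V}^{(i)}_k$ is exactly the SVD of $\vect{\mathcal{T}}^{(i)}_k$. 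This is the key algebraic step: it separates $\vect{\sigma}^{(i)}_k$ into the "true" gradient-aligned feedback $\vect{\delta}^{(i)}_k$ (which, when fed into the trace-cost, reproduces the $J_E$ part of $H$ up to terms absorbed into the constant $\xi$) plus a perturbation term of magnitude proportional to $|s|$.

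The second step is to bound the perturbation term. The stray piece contributes $\tfrac12 \operatorname{tr}\big( (f^{(i-1)}(\vect{x})\,\epsilon_k\, s\, \vect{U}^{(i)}_k \vect{V}^{(i)}_k)^T \vect{P}\, \hat{\vect{W}}^{(i)}_k \big)$ per layer. Using the cyclic and submultiplicative properties of the trace/Frobenius norm, $|\operatorname{tr}(A^T B)| \le \|A\|\,\|B\|$, and the orthogonality $\|\vect{U}^{(i)}_k \vect{V}^{(i)}_k\| \le \sqrt{\eta}$ (or a similar dimension bound consistent with the activation bound), together with $\|f^{(i-1)}(\vect{x})\| \le \sqrt{\eta}$ from Assumption \ref{ass1}, $\|\vect{P}\| \le 1$ by hypothesis, and $\|\hat{\vect{W}}^{(i)}_k\| \le W_B$ (which follows from the parameter bound $\vect{\theta}_B$ in Assumption \ref{ass1}, so one sets $W_B = \vect{\theta}_B$ or a constant multiple thereof), each layer's perturbation is bounded by $\tfrac{|s|}{2}\|\epsilon_k\|\, W_B\,\sqrt{\eta}$. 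Summing over the $d$ layers gives the leading term $d\,\tfrac{|s|}{2}\|\epsilon_k\|\,W_B\,\sqrt{\eta}$. The residual $\|\xi\|$ collects the bounded discrepancy between the original cost's $J_E$ contribution and the trace representation of the gradient-aligned part — this includes the approximation-error cross terms involving $\varepsilon$ and the difference between evaluating $J_E$ itself versus its first-order (gradient) surrogate encoded in the trace form — all of which are bounded by Assumption \ref{ass1} ($J_E \le L$, $\partial J_E/\partial\hat{\vect{\theta}} \le M$, $\|\hat{\vect{\theta}}_k\| \le \vect{\theta}_B$).

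I expect the main obstacle to be pinning down precisely what $\xi$ absorbs and arguing it is genuinely a constant independent of $k$. The subtlety is that $H$ is defined in terms of the scalar cost $J_E$ plus regularizers, whereas $\mathcal{H}$ is built from trace terms that look more like inner products of feedback signals with weights — these are not literally equal even when $\vect{B}^{(i)}_k = \vect{\mathcal{T}}^{(i)}_k$, so one must be careful to justify that their difference is uniformly bounded rather than merely finite for each fixed $k$. I would handle this by invoking the boundedness of $\epsilon_k$ (which follows from $J_E \le L$, hence $\|\epsilon_k\| \le \sqrt{2L}$), the boundedness of all weights and activations, and the boundedness of $\nabla f^{(j)}$ implied by the activation assumptions, so that $\vect{\mathcal{T}}^{(i)}_k$ itself has a uniform norm bound; then every term in the difference is a product of uniformly bounded factors, and collecting them defines the constants $W_B$ and $\xi$ once and for all. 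A minor additional point is that $s \sim \mathcal{N}(0,1)$ is random, so the bound is stated pathwise in $|s|$; no expectation over $s$ is needed here since the claim is a deterministic inequality conditional on the realized perturbation.
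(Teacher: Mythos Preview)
Your proposal is correct and follows essentially the same route as the paper: rewrite $H$ in a trace form (the paper invokes an auxiliary lemma for this step, absorbing the residual into the constant $\xi$), cancel the regularizers, use the SVD splitting $\vect{B}^{(i)}_k = \vect{\mathcal{T}}^{(i)}_k + s\,\vect{U}^{(i)}_k\vect{V}^{(i)}_k$ so that $\vect{\delta}^{(i)}_k - \vect{\sigma}^{(i)}_k$ carries only the $s$-perturbation, and then bound the leftover trace term layer by layer via $\|f^{(i-1)}(\vect{x})\|\le\sqrt{\eta}$, $\|\vect{P}\|\le 1$, and $\|\hat{\vect{W}}^{(i)}_k\|\le W_B$. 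The only slip is your estimate $\|\vect{U}^{(i)}_k\vect{V}^{(i)}_k\|\le\sqrt{\eta}$: the paper uses $\|\vect{U}^{(i)}_k\vect{V}^{(i)}_k\|=1$ (product of orthonormal matrices in operator norm), which is exactly what is needed to land on a single factor of $\sqrt{\eta}$ in the final bound.
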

\begin{proof}
Consider the quantity $H( \hat{\vect{\theta}}_{k} ) - \mathcal{H}( \hat{\vect{\theta}}_{k} )$ with the definitions of $H( \hat{\vect{\theta}}_{k} )$ given in Lemma 1 and $ \mathcal{H}( \hat{\vect{\theta}}_{k} )$ in \eqref{eq15} to get %and write according to Lemma 1, 2 and Eq. \eqref{eq15} that 
\begin{align} 
  H( \hat{\vect{\theta}}_{k} ) - \mathcal{H}( \hat{\vect{\theta}}_{k} ) &=&  \sum_{i=1}^{d}\frac{1}{2} \big[tr( ( \vect{\delta}_{k}^{(i)} )^{T} \vect{P} {\hat{\vect{W}}^{(i)}}_{k} ) + \lambda^{(i)} R({\vect{W}^{(i)}_{k}}) \nonumber \\  &+& \xi  -  \sum_{i=1}^{d} \frac{1}{2} \big[tr( ( \vect{\sigma}_{k}^{(i)} )^{T} \vect{P} {\hat{\vect{W}}^{(i)}}_{k} ) \nonumber \\ &-& \lambda^{(i)} R({\vect{W}^{(i)}_{k}}).
  \label{eq15}
\end{align} 
On simplification, we get  
\begin{align} 
  H( \hat{\vect{\theta}}_{k} ) - \mathcal{H}( \hat{\vect{\theta}}_{k} ) &=&  \sum_{i=1}^{d}\frac{1}{2} tr( ( \vect{\delta}_{k}^{(i)} - \vect{\sigma}_{k}^{(i)})^{T} \vect{P} {\hat{\vect{W}}^{(i)}}_{k}) \nonumber + \xi  
\end{align} 
This difference can be further simplified as
\begin{align} 
  &&  H( \hat{\vect{\theta}}_{k} )-\mathcal{H}( \hat{\vect{\theta}}_{k} ) =   \sum_{i=1}^{d}  \frac{1}{2} \big[tr( f^{(i-1)}(\vect{x}) \epsilon_{k} \nonumber \\  && (\vect{\mathcal{T}}^{(i)}_k - \vect{B}^{(i)}_k )^T \vect{I}^{(i)}  \vect{P} {\hat{\vect{W}}^{(i)}}_{k} ) + \xi,
\end{align} 
where we use $\vect{B}^{(i)}_{k} = \vect{U}^{(i)}_{k}(\Sigma^{(i)}_{k}+s \times I^{(i)})\vect{V}^{(i)}_{k},$ and  $\vect{\mathcal{T}}^{(i)}_{k} = \vect{U}^{(i)}_{k} \Sigma^{(i)}_{k} \vect{V}^{(i)}_{k}$ to get  
\begin{align} 
    H( \hat{\vect{\theta}}_{k} )-\mathcal{H}( \hat{\vect{\theta}}_{k} ) &=&   \sum_{i=1}^{d}  \frac{1}{2} \big[tr( f^{(i-1)}(\vect{x}) \epsilon_{k} \nonumber \\  (\vect{U}^{(i)}_{k} \Sigma^{(i)}_{k} \vect{V}^{(i)}_{k} &-& \vect{U}^{(i)}_{k}(\Sigma^{(i)}_{k}+s \times I^{(i)})\vect{V}^{(i)}_{k}  ) \vect{I}^{(i)}  \vect{P} {\hat{\vect{W}}^{(i)}}_{k} ) \nonumber \\ &+& \xi,
\end{align} 
which yields
\begin{align} 
    H( \hat{\vect{\theta}}_{k} )-\mathcal{H}( \hat{\vect{\theta}}_{k} ) &=&  \sum_{i=1}^{d}  \frac{1}{2} tr( f^{(i-1)}(\vect{x}) \epsilon_{k} \nonumber \\  ( \vect{U}^{(i)}_{k}(-s &\times& I^{(i)})\vect{V}^{(i)}_{k}  ) \vect{I}^{(i)}  \vect{P} {\hat{\vect{W}}^{(i)}}_{k} ) + \xi.
\end{align} 
We can further simplify the difference as 
\begin{align} 
    && H( \hat{\vect{\theta}}_{k} )-\mathcal{H}( \hat{\vect{\theta}}_{k} ) =  \xi - s \sum_{i=1}^{d} \frac{1}{2} \nonumber \\ && tr( f^{(i-1)}(\vect{x}) \epsilon_{k}  \vect{U}^{(i)}_{k}\vect{I}^{(i)} \vect{V}^{(i)}_{k}  \vect{I}^{(i)}  \vect{P} {\hat{\vect{W}}^{(i)}}_{k} )
\end{align} 
Taking norm both sides to get 
\begin{align} 
    && H( \hat{\vect{\theta}}_{k} )-\mathcal{H}( \hat{\vect{\theta}}_{k} )  =  \| \xi - s \sum_{i=1}^{d} \frac{1}{2} \nonumber \\ && tr( f^{(i-1)}(\vect{x}) \epsilon_{k}  \vect{U}^{(i)}_{k}\vect{V}^{(i)}_{k}  \vect{I}^{(i)}  \vect{P} {\hat{\vect{W}}^{(i)}}_{k} )\| 
\end{align} 
Using the trace property along with triangle inequality to get 
\begin{align} 
    \| H( \hat{\vect{\theta}}_{k} )-\mathcal{H}( \hat{\vect{\theta}}_{k} )\| \leq \|\xi\| + |s| \sum_{i=1}^{d} \frac{1}{2} \nonumber \\  \|f^{(i-1)}(\vect{x})\| \|\epsilon_{k}\| \|\vect{U}^{(i)}_{k}\vect{V}^{(i)}_{k}\| \|\vect{P}\| \|{\hat{\vect{W}}^{(i)}}_{k}\|).
\end{align} 
We invoke the condition $\| {\hat{\vect{W}}^i}_{k}\| \leq W_B,$ $f^{(i-1)}(\vect{x}) <\sqrt{\eta}$, for $i=1,\ldots,d$, and choose $\vect{P}$ such that $\|\vect{P}\| \leq 1,$. Furthermore, $\|\vect{U}^{(i)}_{k}\vect{I}^{(i)}\vect{V}^{(i)}_{k}\|$ is a norm taken over the product of orthonormal matrices and the norm of an orthonormal matrix is 1. Under these conditions, we may write 
\begin{align} 
    \| H( \hat{\vect{\theta}}_{k} )-\mathcal{H}( \hat{\vect{\theta}}_{k} )\| \leq \frac{|s| \sqrt{\eta} d}{2} \|\epsilon_{k}\|  W^{(i)}_B + ||\xi||.
\end{align} 
Taking expected value both sides provides 
\begin{align} 
    E\| H( \hat{\vect{\theta}}_{k} )-\mathcal{H}( \hat{\vect{\theta}}_{k} )\| &\leq& E \big[ \frac{|s| \|\epsilon_{k}\|  \sqrt{\eta} d}{2}  W^{(i)}_B + ||\xi|| \big] \nonumber \\
    &\leq&  \frac{E[|s| \|\epsilon_{k}\|] \sqrt{\eta} d}{2}   W^{(i)}_B + ||\xi||    \\
    &\leq&\frac{  E \big[ |s| \big] E \big[ \|\epsilon_{k}\|] \sqrt{\eta} d}{2}   W^{(i)}_B\big] + ||\xi||. \nonumber
\end{align} 
Since, by assumption 1, $E \big[ |s| \big]= 0$, then $E \| H( \hat{\vect{\theta}}_{k} )-\mathcal{H}( \hat{\vect{\theta}}_{k} )\| \rightarrow  \| \xi \|.$

\end{proof}
In Proposition \ref{prop:cost_error}, we show that the cost defined as part of the EDL algorithm \eqref{total_cost} is an approximation to the original cost \eqref{eq:orignal} with residuals controlled by the choice of the perturbations. In other words, the residual will be zero when the choice of perturbations are uniformly zero.  %However, this analysis still does not imply convergence of the methodology. 
In the next theorem, we seek to establish the convergence of $J_E$.  %Finally, we will state the update rule with $  \vect{\Delta}_{k} = -\frac{ \partial \mathcal{H}( \hat{\vect{\theta}}(k)) }{\partial \hat{\vect{\theta}}_{k} },$ and show the convergence of the approximated action value function to the optimal action value function.

\begin{thm}
Let Assumption 1 be true. Let the gradient of the empirical cost be given as $\frac{\partial J_{E}( \hat{\vect{\theta}}_{k} ) }{\partial \hat{\vect{\theta}}_{k}} = [ f^{(1)}(\vect{x}) \epsilon_{k} \vect{\mathcal{T}}^{(1)}, \cdots, f^{(d)}(\vect{x}) \epsilon_{k}  \vect{\mathcal{T}}^{(d)}].$ Consider the weight update rule as in \eqref{eq:weight_update_gen} with $\vect{\Delta}_{k} = -\alpha(k) \times \frac{ \partial \mathcal{H}_{k}( \hat{\vect{\theta}}(k))  }{\partial \hat{\vect{\theta}}_{k} }$ with $\alpha(k) > 0$, and  $\frac{\partial \mathcal{H}_{k}( \hat{\vect{\theta}}_{k}) }{\partial \hat{\vect{\theta}}_{k}}  =  [ f^{(1)}(\vect{x}) \epsilon_{k}  \vect{{B}}^{(1)}+ \frac{\partial \lambda^{(i)} R(\hat{\vect{W}}^{(1)})}{\partial\hat{\vect{W}}^{(1)}},   \cdots, f^{(d)}(\vect{x}) \epsilon_{k} \vect{{B}}^{(d)} + \frac{\partial \lambda^{(i)} R(\hat{\vect{W}}^{(d)})}{\partial\hat{\vect{W}}^{(d)}} ],$ where the feedback matrix is defined as in Proposition \ref{prop:cost_error}. If the singular values of $\vect{B}^{(i)}, i = 1, \cdots d$, are non-zero %$  \vect{\Delta}_{k} = -\frac{ \partial \mathcal{H}( \hat{\vect{\theta}}(k)) }{\partial \hat{\vect{\theta}}_{k} },$,
then $J_E$ converges asymptotically in the mean.
 \end{thm}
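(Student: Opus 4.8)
The plan is to use the regularized cost $H(\hat{\vect{\theta}}_k)$ of \eqref{eq:orignal} as a stochastic Lyapunov function — so that convergence of $\mathbb{E}[J_E(\hat{\vect{\theta}}_k)]$ follows once the regularization term is controlled — and to show that the EDL update produces, in conditional expectation, a decrease up to an $O(\alpha(k)^2)$ term, after which a supermartingale (Robbins--Siegmund) / stochastic-approximation argument delivers convergence in the mean. Write $\vect{g}_k := \partial H(\hat{\vect{\theta}}_k)/\partial\hat{\vect{\theta}}_k$, which by \eqref{eq11}--\eqref{eq2_Grad} has layer blocks $f^{(i-1)}(\vect{x})\epsilon_k\vect{\mathcal{T}}^{(i)}_k + \lambda^{(i)}\nabla R(\hat{\vect{W}}^{(i)}_k)$, and $\vect{h}_k := \partial\mathcal{H}_k(\hat{\vect{\theta}}_k)/\partial\hat{\vect{\theta}}_k$ for the EDL direction, with layer blocks $f^{(i-1)}(\vect{x})\epsilon_k\vect{B}^{(i)}_k + \lambda^{(i)}\nabla R(\hat{\vect{W}}^{(i)}_k)$.

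Since $J_E\in C^1$ (Assumption 1) and the regularizer is smooth, $H\in C^1$ with bounded (hence, on the compact parameter ball $\Theta$, Lipschitz) gradient, so first I would write the first-order expansion $H(\hat{\vect{\theta}}_{k+1}) \le H(\hat{\vect{\theta}}_{k}) + \langle \vect{g}_k,\, \hat{\vect{\theta}}_{k+1}-\hat{\vect{\theta}}_{k}\rangle + \tfrac{L_0}{2}\|\hat{\vect{\theta}}_{k+1}-\hat{\vect{\theta}}_{k}\|^2$ and substitute the update $\hat{\vect{\theta}}_{k+1}-\hat{\vect{\theta}}_{k} = -\alpha\,\alpha(k)\,\vect{h}_k$. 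Using $\vect{B}^{(i)}_k = \vect{\mathcal{T}}^{(i)}_k + s\,\vect{U}^{(i)}_k\vect{V}^{(i)}_k$ from Remark \ref{rem:feedbak_matrix}, the two directions differ only by the perturbation: $\vect{h}_k = \vect{g}_k + s\,\vect{G}_k$, where $\vect{G}_k$ has layer blocks $f^{(i-1)}(\vect{x})\epsilon_k\vect{U}^{(i)}_k\vect{V}^{(i)}_k$ and, by Assumption 1 ($\|\epsilon_k\|^2\le 2L$ since $J_E\le L$, $\|f^{(i-1)}\|\le\sqrt{\eta}$, $\|\vect{U}^{(i)}_k\vect{V}^{(i)}_k\|\le 1$), satisfies $\|\vect{G}_k\|\le C_G$ for a fixed $C_G$. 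Taking $\mathbb{E}[\,\cdot\mid\mathcal{F}_k]$ over the independent perturbation $s\sim\mathcal{N}(0,1)$, the cross term $\langle\vect{g}_k,\, s\vect{G}_k\rangle$ vanishes because $\mathbb{E}[s]=0$, while $\mathbb{E}[s^2]=1$ and $\|\vect{g}_k\|\le M$ (plus a bounded regularization gradient on $\Theta$) keep $\mathbb{E}[\|\vect{h}_k\|^2\mid\mathcal{F}_k]$ bounded by a constant $C$. This yields the key supermartingale inequality
\[
\mathbb{E}\big[H(\hat{\vect{\theta}}_{k+1})\mid\mathcal{F}_k\big] \;\le\; H(\hat{\vect{\theta}}_{k}) \;-\; \alpha\,\alpha(k)\,\|\vect{g}_k\|^2 \;+\; \tfrac{L_0 C}{2}\,\alpha^2\alpha(k)^2 .
\]

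Since $H\ge 0$ is bounded on $\Theta$ and, under a step-size schedule with $\sum_k\alpha(k)^2<\infty$, the last term is summable, the Robbins--Siegmund supermartingale convergence theorem gives that $H(\hat{\vect{\theta}}_k)$ converges almost surely and in the mean and that $\sum_k\alpha(k)\,\mathbb{E}\|\vect{g}_k\|^2<\infty$; with $\sum_k\alpha(k)=\infty$ this forces $\liminf_k\mathbb{E}\|\vect{g}_k\|^2 = 0$. This is exactly where the hypothesis that each $\vect{B}^{(i)}$ has non-zero singular values — equivalently $\vect{B}^{(i)}_k(\vect{B}^{(i)}_k)^{\top}\succ 0$, as imposed in the construction — becomes essential: it rules out the spurious stationary points at which $\vect{\mathcal{T}}^{(i)}$ collapses while the TD error does not (the classical vanishing-gradient trap), so that $\vect{g}_k\to 0$ in the mean can be upgraded to $\epsilon_k\to 0$ in the mean after a linearization of $\hat y$ in the weights, and hence $\mathbb{E}[J_E(\hat{\vect{\theta}}_k)] = \tfrac12\mathbb{E}[\epsilon_k^2]$ converges. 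With a small fixed step the same computation gives convergence of $\mathbb{E}[J_E]$ to a ball of radius $O(\alpha)$; a decaying regularization $\lambda^{(i)}\to 0$ is what removes the residual contribution of $\sum_i\lambda^{(i)}R(\hat{\vect{W}}^{(i)})$ so that $\mathbb{E}[J_E]$ and $\mathbb{E}[H]$ have the same limit.

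I expect the main obstacle to be this upgrade from ``$\vect{g}_k\to 0$ in mean'' to ``$\epsilon_k\to 0$ in mean'': making the qualitative hypothesis ``$\vect{B}^{(i)}$ has non-zero singular values'' do quantitative work requires relating the TD error to the weight increments through a linearization of $\hat y$ together with a matching-type condition tying $\vect{\mathcal{T}}^{(i)}$ to $\vect{B}^{(i)}$ (so that progress measured through $\vect{g}_k$, which carries $\vect{\mathcal{T}}^{(i)}$, can be charged to a step actually taken along $\vect{B}^{(i)}$), and carefully disposing of the regularization contribution — exact convergence needs $\lambda^{(i)}\to 0$, otherwise only convergence to a neighborhood is available. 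A secondary subtlety is that in the online RL setting the target inside $\epsilon_k$ moves with the sampled pair $(\vect{x},\vect{u})$, so the conditional expectations above must be read over the sampling distribution and the recursion run as stochastic approximation with bounded noise — which is precisely what the boundedness clauses of Assumption 1 are designed to support.
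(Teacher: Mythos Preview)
Your route and the paper's are quite different. The paper works with $J_E$ itself (not $H$) as the potential, writes the first-order expansion $J_E(\hat{\vect{\theta}}_{k+1})-J_E(\hat{\vect{\theta}}_k)=-\alpha(k)\,\langle \nabla_{\hat{\vect{\theta}}}J_E,\nabla_{\hat{\vect{\theta}}}\mathcal{H}\rangle$ with no quadratic remainder, and then expands that inner product layer by layer into three pieces $V_1^{(i)},V_2^{(i)},V_3^{(i)}$: a complete-square term coming from $\vect{\mathcal{T}}^{(i)T}(\cdot)\vect{\mathcal{T}}^{(i)}$, a cross term carrying the perturbation $s$, and the regularization cross term. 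Instead of averaging the $s$-term to zero as you do, the paper argues $V_2^{(i)}\ge 0$ by taking $s$ so that $s\,\Sigma^{(i)}$ is positive semidefinite, and it forces $V_3^{(i)}\ge 0$ by \emph{choosing the sign of} $\lambda^{(i)}$ adaptively at each step. With all three pieces nonnegative the first difference of $J_E$ is nonpositive, and an expectation is taken at the end to conclude convergence in the mean. There is no Robbins--Siegmund step, no step-size schedule, and no decay of $\lambda^{(i)}$.

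Your stochastic-approximation outline is more careful about the quadratic remainder the paper simply drops, but it imports hypotheses the theorem does not provide: a Robbins--Monro schedule $\sum_k\alpha(k)=\infty$, $\sum_k\alpha(k)^2<\infty$ (the statement only assumes $\alpha(k)>0$), and $\lambda^{(i)}\to 0$. Without these you only reach an $O(\alpha)$-ball, as you already observe. More substantively, the ``upgrade'' you flag --- from $\mathbb{E}\|\vect{g}_k\|^2\to 0$ to $\mathbb{E}[\epsilon_k^2]\to 0$ --- is precisely where the two approaches diverge: the paper never routes through $\|\vect{g}_k\|$ at all, it gets monotonicity of $J_E$ directly by signing each $V_j^{(i)}$. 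In your plan the nondegeneracy of $\vect{B}^{(i)}$ does not by itself control $\vect{\mathcal{T}}^{(i)}$, which is what appears in $\vect{g}_k$, so the matching/linearization step you sketch is still a genuine gap rather than a formality.
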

\begin{proof}
We will begin with the first order expansion of $J_{E}(\hat{\vect{\theta}}_{k+1})$ around $\hat{\vect{\theta}}_{k}$ to get 
\begin{align}
        J_{E}(\hat{\vect{\theta}}_{k+1}) &=& J_{E}(\hat{\vect{\theta}}_{k}) + \bigg(\frac{\partial J_{E}(  \hat{\vect{\theta}}_{k} ) }{\partial \hat{\vect{\theta}}_{k}}\bigg)^T [\hat{\vect{\theta}}_{k+1} - \hat{\vect{\theta}}_k],
\end{align}
We add and subtract $J(\hat{\vect{\theta}}_{k}) $ on the right hand side to achieve the first difference as
\begin{align}
     J_{E}(\hat{\vect{\theta}}_{k+1}) - J_{E}(\hat{\vect{\theta}}_{k})  &=& \bigg(\frac{\partial J_{E}( \hat{\vect{\theta}}_{k} ) }{ \partial \hat{\vect{\theta}}_{k}}\bigg)^T [\hat{\vect{\theta}}_{k+1} - \hat{\vect{\theta}}_k],
\end{align}
Substitute the weight updates given by $ -\alpha(k) \frac{\partial \mathcal{H}_{k}( \hat{\vect{\theta}}_{k}) }{\partial \hat{\vect{\theta}}_{k}}$, we get 
\begin{align}
     J_{E}(\hat{\vect{\theta}}_{k+1}) - J_{E}(\hat{\vect{\theta}}_{k})  = - \alpha(k)\bigg[  \bigg(\frac{\partial J_{E}(  \hat{\vect{\theta}}_{k} ) }{ \partial \hat{\vect{\theta}}_{k}}\bigg)^T  \frac{\partial \mathcal{H}_{k}( \hat{\vect{\theta}}_{k}) }{\partial \hat{\vect{\theta}}_{k}} \bigg].
\end{align}
Expanding $\frac{\partial J_{E}( \hat{\vect{\theta}}_{k} ) }{\partial \hat{\vect{\theta}}_{k}} = [ f^{(1)}(\vect{x}) \epsilon_{k} \vect{\mathcal{T}}^{(1)} \cdots f^{(d)}(\vect{x}) \epsilon_{k}  \vect{\mathcal{T}}^{(d)}] $ and $\frac{\partial \mathcal{H}_{k}( \hat{\vect{\theta}}_{k}) }{\partial \hat{\vect{\theta}}_{k}}  = \bigg[ \bigg( f^{(1)}(\vect{x}) \epsilon_{k}  \vect{{B}}^{(1)}+ \frac{\partial \lambda^{(i)} R(\hat{\vect{W}}^{(1)})}{\partial\hat{\vect{W}}^{(1)}} \bigg) \cdots \bigg(f^{(d)}(\vect{x}) \epsilon_{k} \vect{{B}}^{(d)} + \frac{\partial \lambda^{(i)} R(\hat{\vect{W}}^{(d)})}{\partial\hat{\vect{W}}^{(d)}}\bigg) \bigg] $ and write 
\begin{align}
        &&  J_{E}(\hat{\vect{\theta}}_{k+1}) - J_{E}(\hat{\vect{\theta}}_{k}) = -\alpha(k) \sum_{i = 1}^d \bigg[ \bigg( f^{(i-1)}(\vect{x}) \epsilon_{k}  (\vect{\mathcal{T}}^{(i)} \bigg)^{T} \nonumber \\  && \bigg( f^{(i-1)}(\vect{x}) \epsilon_{k}  \vect{{B}}^{(i)}  \vect{P}  + \frac{\partial \lambda^{(i)} R(\hat{\vect{W}}^{(i)})}{\partial \hat{\vect{W}}^{(i)}}  \bigg) \bigg],
\end{align}
which provides 
\begin{align}
        &&  J_{E}(\hat{\vect{\theta}}_{k+1}) - J_{E}(\hat{\vect{\theta}}_{k}) \nonumber \\  &=& -\sum_{i = 1}^d \alpha(k)  \bigg[ \bigg( f^{(i-1)}(\vect{x}) \epsilon_{k} \vect{\mathcal{T}}^{(i)} \bigg)^{T}  \bigg( f^{(i-1)}(\vect{x}) \epsilon_{k}  \vect{{B}}^{(i)}  \vect{P} \bigg) \nonumber \\ &+& \bigg( f^{(i-1)}(\vect{x}) \epsilon_{k} \vect{\mathcal{T}}^{(i)} \bigg)^{T} \bigg( \frac{\partial \lambda^{(i)} R(\hat{\vect{W}}^{(i)})}{\partial \hat{\vect{W}}^{(i)}}  \bigg) \bigg],
\end{align}
Collecting terms to get 
\begin{align}
        &&  J_{E}(\hat{\vect{\theta}}_{k+1}) - J_{E}(\hat{\vect{\theta}}_{k}) \nonumber \\  &=& - \alpha(k) \sum_{i = 1}^d   \bigg[ \vect{\mathcal{T}}^{(i)T} \epsilon^T_{k} f^{(i-1)}(\vect{x})^T f^{(i-1)}(\vect{x}) \epsilon_{k}  \vect{{B}}^{(i)}  \vect{P}  \nonumber \\ &+& \vect{\mathcal{T}}^{(i)T} \epsilon^T_{k} f^{(i-1)}(\vect{x})^T \bigg( \frac{\partial \lambda^{(i)} R(\hat{\vect{W}}^{(i)})}{\partial \hat{\vect{W}}^{(i)}}  \bigg) \bigg],
\end{align}
Substitute  $\vect{B}^{(i)}_{k} = \vect{U}^{(i)}_{k}(\Sigma^{(i)}_{k}+s \times I^{(i)})\vect{V}^{(i)}_{k},$ and  $\vect{\mathcal{T}}^{(i)}_{k} = \vect{U}^{(i)}_{k} \Sigma^{(i)}_{k} \vect{V}^{(i)}_{k}$ to write 
\begin{align}
        &&  J_{E}(\hat{\vect{\theta}}_{k+1}) - J_{E}(\hat{\vect{\theta}}_{k})
        = - \alpha(k)  \sum_{i = 1}^d  \bigg[ [\vect{U}^{(i)}_{k}(\Sigma^{(i)}_{k})\vect{V}^{(i)}_{k}]^T \epsilon^T_{k} \nonumber \\
        && f^{(i-1)}(\vect{x})^T f^{(i-1)}(\vect{x}) \epsilon_{k} [\vect{U}^{(i)}_{k}(\Sigma^{(i)}_{k}+s \times I^{(i)})\vect{V}^{(i)}_{k}]  \vect{P}  \nonumber \\
        &+&\vect{\mathcal{T}}^{(i)T} \epsilon^T_{k} f^{(i-1)}(\vect{x})^T 
        +\bigg( \frac{\partial \lambda^{(i)} R(\hat{\vect{W}}^{(i)})}{\partial \hat{\vect{W}}^{(i)}}  \bigg) \bigg],
\end{align}
which we expand to get 
\begin{align}
        &&  J_{E}(\hat{\vect{\theta}}_{k+1}) - J_{E}(\hat{\vect{\theta}}_{k})= -\alpha(k)  \sum_{i = 1}^d  \bigg[ \vect{V}^{(i)T}_{k} {\Sigma^{(i)T}_{k}} {\vect{U}^{(i)T}_{k}} \epsilon^T_{k} \nonumber \\
        && f^{(i-1)}(\vect{x})^T f^{(i-1)}(\vect{x}) \epsilon_{k}\vect{U}^{(i)}_{k}\Sigma^{(i)}_{k}\vect{V}^{(i)}_{k}  \vect{P} 
        + \vect{V}^{(i)T}_{k} {\Sigma^{(i)T}_{k}}\nonumber \\ &&  {\vect{U}^{(i)T}_{k}} \epsilon^T_{k} f^{(i-1)}(\vect{x})^T  f^{(i-1)}(\vect{x}) \epsilon_{k}  \vect{U}^{(i)}_{k}(s \times I^{(i)})\vect{V}^{(i)}_{k}  \vect{P} + \nonumber \\ && \vect{\mathcal{T}}^{(i)T} \epsilon^T_{k} f^{(i-1)}(\vect{x})^T \bigg( \frac{\partial \lambda^{(i)} R(\hat{\vect{W}}^{(i)})}{\partial \hat{\vect{W}}^{(i)}}  \bigg) \bigg] ,
\end{align}
Let $V^{(i)}_1 =  \vect{V}^{(i)T}_{k} {\Sigma^{(i)T}_{k}} {\vect{U}^{(i)T}_{k}} \epsilon^T_{k}f^{(i-1)}(\vect{x})^T f^{(i-1)}(\vect{x}) \epsilon_{k}$ $\vect{U}^{(i)}_{k}\Sigma^{(i)}_{k})\vect{V}^{(i)}_{k}  \vect{P},$ $V^{(i)}_2 = \vect{V}^{(i)T}_{k} {\Sigma^{(i)T}_{k}} {\vect{U}^{(i)T}_{k}} \epsilon^T_{k}$ $f^{(i-1)}(\vect{x})^T f^{(i-1)}(\vect{x}) \epsilon_{k}  [\vect{U}^{(i)}_{k}(s \times I^{(i)})\vect{V}^{(i)}_{k}]  \vect{P}$ and $V^{(i)}_3 = \vect{\mathcal{T}}^{(i)T} \epsilon^T_{k} f^{(i-1)}(\vect{x})^T \bigg( \frac{\partial \sum_{i = 0}^d \lambda^{(i)} R(\hat{\vect{W}}^{(i)})}{\partial \hat{\vect{\theta}}_{k} } \bigg)$ and write 
\begin{align}
        &&  J_{E}(\hat{\vect{\theta}}_{k+1}) - J_{E}(\hat{\vect{\theta}}_{k}) = - \sum_{i = 1}^d \alpha(k) \bigg[   V^{(i)}_1 + V^{(i)}_2 +V^{(i)}_3\bigg],
\end{align}
For the proof to be complete, it suffices to show that the right hand side is negative which is true as long as the three terms in the brackets $(V^{(i)}_1, V^{(i)}_2, V^{(i)}_3)$ are positive.  The term $V^{(i)}_1$ is  positive as $ \vect{V}^{(i)T}_{k} {\Sigma^{(i)T}_{k}} {\vect{U}^{(i)T}_{k}} \epsilon^T_{k}f^{(i-1)}(\vect{x})^T f^{(i-1)}(\vect{x}) \epsilon_{k}$ $\vect{U}^{(i)}_{k}\Sigma^{(i)}_{k})\vect{V}^{(i)}_{k} $ is a complete square and $\vect{P}$ is positive definite by assumption.  Next, consider the term $V^{(i)}_2$
\begin{align}
      V^{(i)}_2 &=&   \vect{V}^{(i)T}_{k} s{\Sigma^{(i)}_{k}}^T \bigg[ {\vect{U}^{(i)}_{k}}^T \epsilon^T_{k} f^{(i-1)}(\vect{x})^T \nonumber \\ && f^{(i-1)}(\vect{x}) \epsilon_{k}  \vect{U}^{(i)}_{k}\bigg]I^{(i)} \vect{V}^{(i)}_{k}  \vect{P}
\end{align}
The terms in the bracket form a complete square. Therefore, if $\vect{P}$ is positive definite and $s$ is chosen such that $s{\Sigma^{(i)}_{k}}^T$ is positive definite, so $V^{(i)}_2$ is positive. Next, consider the third term $V^{(i)}_3$
\begin{align}
      V^{(i)}_3 &=& \lambda^{(i)} \vect{\mathcal{T}}^{(i)T} \epsilon^T_{k} f^{(i-1)}(\vect{x})^T \bigg( \frac{\partial  R(\hat{\vect{W}}^{(i)})}{\partial \hat{\vect{W}}^{(i)} } \bigg).
\end{align}
% which we can rewrite 
% \begin{align}
%       V^{(i)}_3 &=& \lambda^{(i)} \vect{\mathcal{T}}^{(i)}^T \epsilon^T_{k} f^{(i-1)}(\vect{x})^T \bigg( \frac{\partial  R(\hat{\vect{W}}^{(i)})}{\partial \hat{\vect{W}}^{(i)} } \bigg)
% \end{align}
Choosing $\lambda^{(i)}  = sign(\vect{\mathcal{T}}^{(i)T} \epsilon^T_{k} f^{(i-1)}(\vect{x})^T \bigg( \frac{\partial  R(\hat{\vect{W}}^{(i)})}{\partial \hat{\vect{W}}^{(i)} } \bigg)) \times c,$where $c \in [0,1]$, we get, $V^{(i)}_3>0.$ 
Next, we write
\begin{align}
         & E[J_{E}(\hat{\vect{\theta}}_{k+1}) - J_{E}(\hat{\vect{\theta}}_{k})] = - \sum_{i = 1}^d \alpha(k) E \bigg[   V^{(i)}_1 + V^{(i)}_2 +V^{(i)}_3\bigg], \nonumber \\
        &= - \sum_{i = 1}^d \alpha(k) \Bigg( E \bigg[   V^{(i)}_1 \bigg]+ E\bigg[V^{(i)}_2\bigg] 
        +E \bigg[V^{(i)}_3\bigg] \Bigg),&
\end{align}
By definition of expected values $E[x] = \int p(x) x $, if $x>0$ and $p(x)>0,$ $E[x] >0.$ As a consequence, for a positive definite function $J_E(\cdot)$ along with $V^{(i)}_1,  V^{(i)}_2$ and $V^{(i)}_3 >0,$ the expected value of the first difference is negative. Therefore, as $k\to \infty$, $E[J_{E}(\hat{\vect{\theta}}_{k+1}) - J_{E}(\hat{\vect{\theta}}_{k})] \rightarrow 0$ and the proof is complete.
\end{proof}
 % ==============================================================================
%  \begi00n{remark}
     %The EDL learning rule was introduced in \cite{Ragh2017error}, and  it was shown to perform very well in learning applications. In this paper, we attempt to uncover the reasons for the performance of the EDL. In this context, the analysis presented in this section reveals the following. 
     The results of Proposition 1 reveal that the cost function which is minimized by the EDL is a first-order approximation of the empirical cost, and that the difference between these costs will reduce as the weights of the NN converge. Using Theorem 1, we show that the parameters of the EDL-based update rule can be chosen such the the original cost function asymptotically converges, so that the TD error converges asymptotically.
%  \end{remark}
% Equipped with the paramter update, we sketch our learning algorithm as follows. 
%================ algorithm =======================
\subsection{Algorithm}
 %================ algorithm =======================
\begin{figure}[H]
      \centering
      \vspace{-4mm}
    \includegraphics[width=0.9\columnwidth,keepaspectratio]{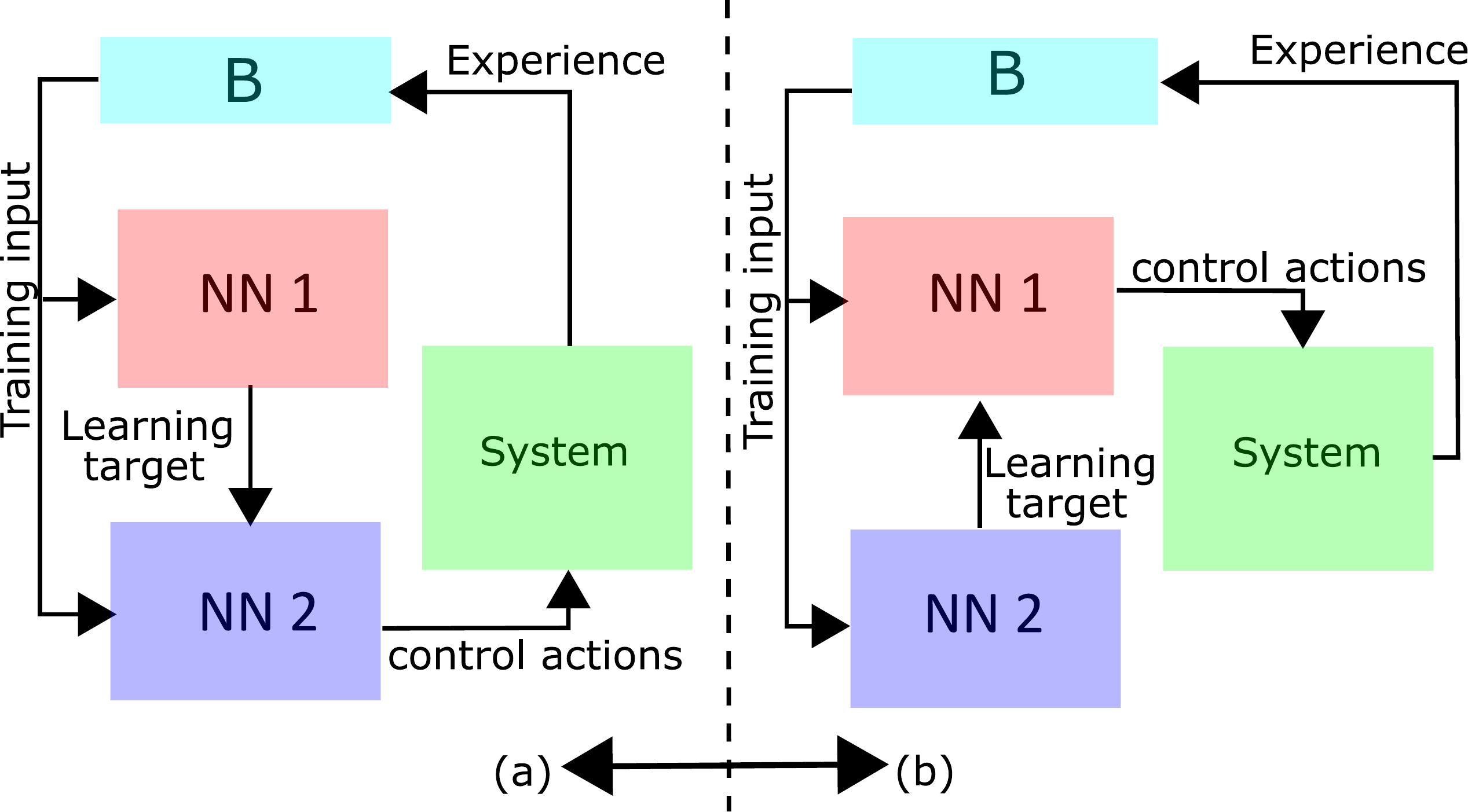}
    \caption{\footnotesize Schematic of the coop DQN architecture. The two NNs~(NN 1 and NN 2) switch roles, refer (a) and (b). In (a) NN 2 is the Q-network that provides actions and learns from the buffer whereas targets are given by NN 1; and in (b) the NNs swap their roles.}
    \label{fig:schematic}
\end{figure}
\begin{figure*}
  \centering
  % include second image
 \includegraphics[width = 0.55\columnwidth]{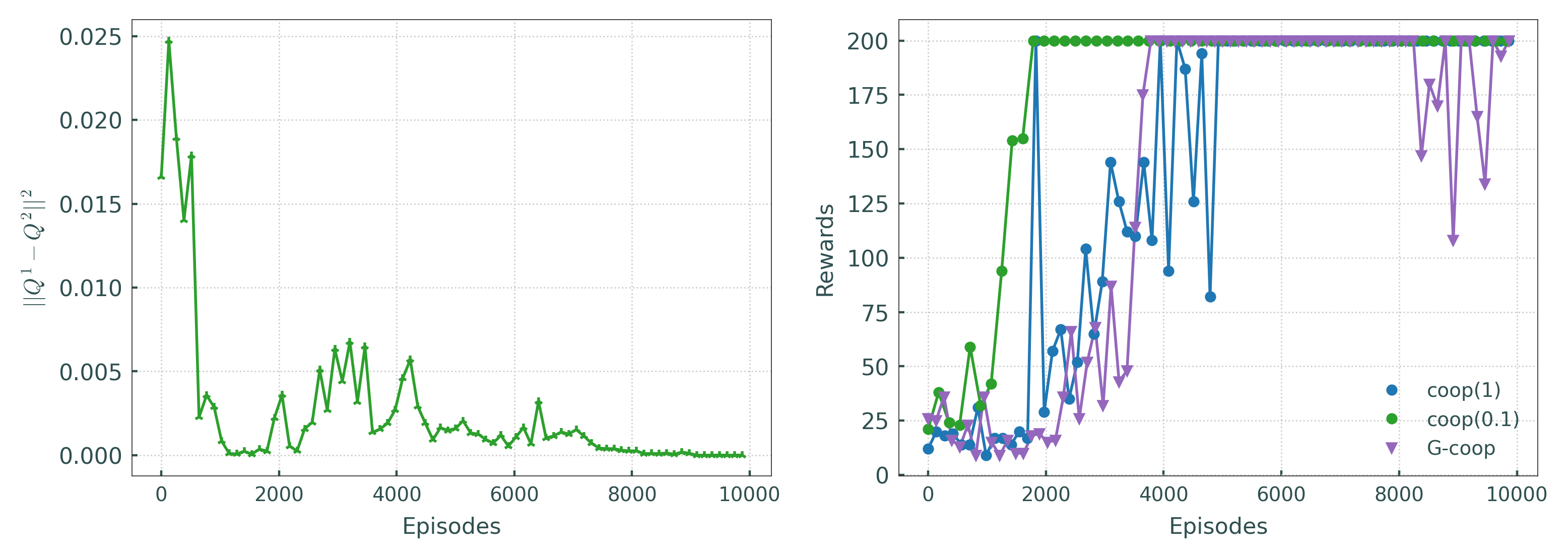}
\caption{\footnotesize Panel A: Mean rewards for different exploration rates, (exploration rate = 0) is the standard SGD, We use ADAM optimizer for this implementation.. Panel B:Difference between the Q-values generated by the two networks as a function of episodes.}
\label{fig:fig_balance}
\end{figure*}

 \begin{figure*}
 \centering
 % include second image
 \includegraphics[width = 0.8\columnwidth,      keepaspectratio]{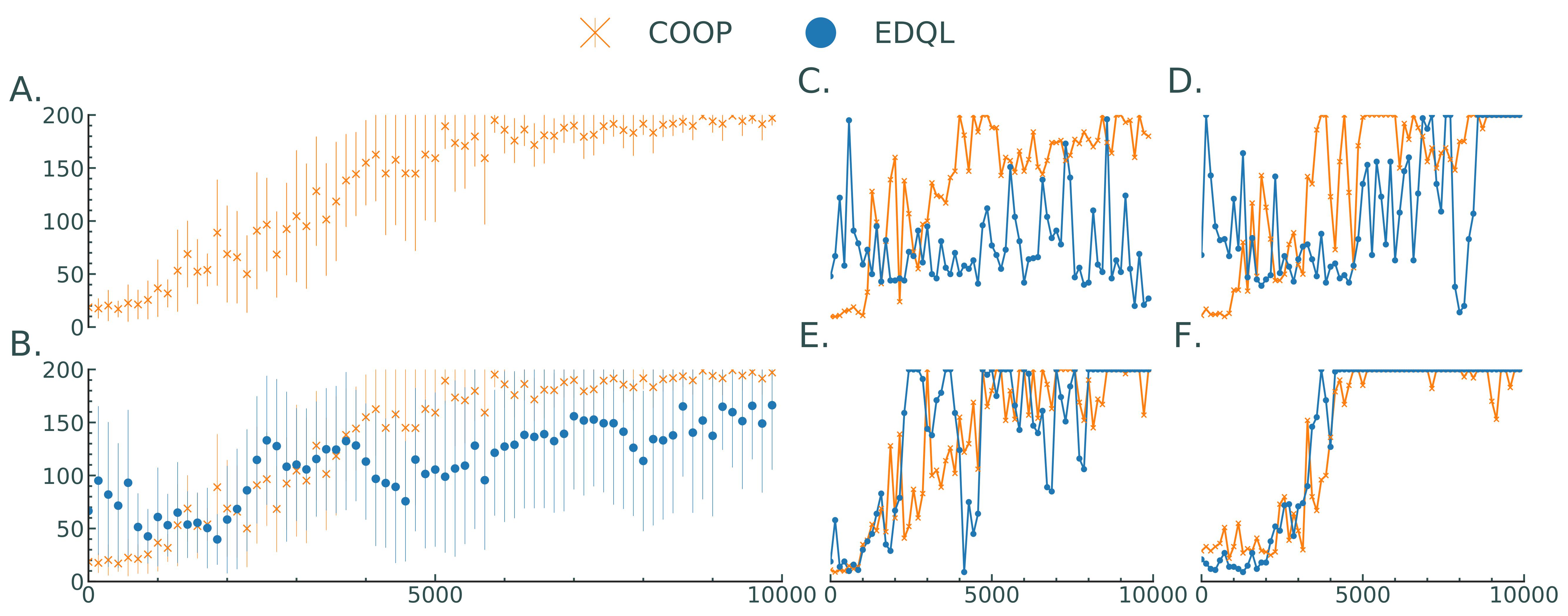}
  \caption{\footnotesize Panel A-B: Mean Rewards (with error bars) depicting variance across different buffer sizes. Buffer sizes considered for this plot are 500,1000, 1500, 2000, 3000, 3500, 4000, 5000). As expected, the average rewards across different buffer sizes are better for the double NN-driven method instead of a single NN driven method. Panel C-F: Trend of rewards with respect to different buffer sizes.} 
  \label{fig:BufferSizes}
\end{figure*}
The training strategy called as cooperative or {\em coop} is described in the following and is illustrated in Fig. \ref{fig:schematic}. The training process is performed for a total of $M$ episodes. At the start of episode $1$, we initialize the two NNs, i.e., $Q_1$ and $Q_2$. In each episode there are $C$ plays. In each play,  we provide a control action to the environment and we receive a tuple comprised of (reward, state, next-state). This tuple and the control action is stored in the experience replay buffer~(denoted as $\mathcal{B}$).  In this training strategy, each episode comprises in repetition of two phases. Phase 1: we first choose $Q_2$ to be target~(the target network) and $Q_1$ to be the network model to be trained~(the actor network); Phase 2: we choose $Q_1$ to be the target network and $Q_2$ to be the the actor network.

We first initiate phase 1, for each play, the actor network is used to gather control action. At the end of each play, the reward and the states are obtained from the environment and $Q_1$ is repeatedly updated using the proposed updated rule. Once, the first phase is completed, we initiate phase 2. In phase 2, $Q_2$ is used to provide control actions and the weights of $Q_2$ are updated. We alternatively switch between phase 1 and phase 2 for every $C$ plays. For each update, we sample a batch of data from the experience replay and use it to evaluate the error and update the weights. This process of learning reaches its culmination when the TD error is small, that is $Q_1$ and $Q_2$ have both reached a common location, where the cost is zero/minimum. At this point, there is no incentive for updating their weights and cooperative strategy has converged. 

In the cooperative learning strategy, the two NNs are utilized to estimate the TD error. Therefore, the asymptotic convergence of $J_E$~(Theorem~1) demonstrates the convergence of the TD error as $J_E$ is squared TD error. %The TD error is defined as a function of the two NNs, the convergence of the TD errors demonstrates the convergence of the two NNs to each other. 
We evaluate this learning strategy against the DQN benchmark~(In our results, we refer to this as DQL) and the results are summarized in the next section.
%===============================================================================
\section{Simulation Results}
\label{sec:result}
%================================================================================
 We use the  cartpole example~\cite{kumar2020balancing} for illustration of our two NN control strategy. The goal of the controller in this example is to balance the pole on the cart and the control action in this example is whether to move left or right. The states of the system must be extracted from images provided by the open AI simulator that provides the python 3.6 environment for implementation. All simulations are performed using NVIDIA Tesla V100 SXM2 w/32GB HBM2 and NVIDIA Tesla K80 w/dual GPUs. Since the measurements are in the form of images, the two NNs are constructed with two convolutional layers and two feed-forward layers each while \emph{relu} activation functions are utilized. The two NNs are identical with respect to hyper-parameters and network structure. We develop three realizations for comparison: (1) DQL--target NN is a copy of the action NN with gradient driven updates (this is the DQN setup proposed in \cite{mnih2015human}); (2) EDQL--target NN is a copy of the action NN with EDL-based updates; (3) G-coop--two NN setup~(coop strategy) with standard gradient driven updates and (4) coop--two NN setup~(coop strategy) with error driven updates.
 
 The number of outputs in this network are equal to the number of action~(two in the case of cartpole). We record the episodic progression of the average reward~(instantaneous reward averaged over $100$ episodes) and the cumulative rewards~(average output of $Q_{1}$ and $Q_2$). We report the mean and standard deviation from past 100 consecutive recordings.  
%==================== subsec ========================
%================ Example 1 =======================
We execute our networks for a total of $10000$ episodes and record $Q$ function values and the cumulative reward at each episode. Coop method achieves a score of $200$ with a standard deviation of $0.2123,$ refer Table. \ref{tab:1}. The reward values for cartpole is capped at $200.$ We choose $ C = 50, len(\mathcal{B}) = 5000$ with an exploration rate of $s = 0.05$ decaying exponentially.  

%================ Subsec===============================
\noindent \textbf{Convergence of the two NNs: } We record the progression of $\|Q_1 -Q_2\|$ with respect to episodes in Fig. \ref{fig:fig_balance}(Panel-A). We see from the figure that $\|Q_1-Q_2\|$ converges to zero near $2000^{th}$ episode at which coop strategy reaches the best cumulative reward values, observed from Fig.\ref{fig:BufferSizes}(Panel-B). These results indicate that the best possible reward for cartpole are obtained when the two NNs converge.
% ==================== Table ========================
\begin{table}
\centering
\small
\begin{tabular}{ccccc}
\hline %\noalign{\smallskip}    & DQL          & EDQL        & G-coop  & coop                 \\
\hline
Cartpole     &  193.6(0.71)  &  199.6(0.82) & 200(0.82)            & \textbf{200(0.2123)} \\
\hline
\end{tabular}
\caption{\footnotesize Mean score (standard deviation) over 100 episodes at Cartpole. \label{tab:1}, From left to right DQL, EDQL, G-coop, coop    }
\end{table}

%==================== subsec ========================
\noindent \textbf{Effect of exploration:} We also analyze the affect of different exploration rate for the coop methodology and compare it to that of G-coop~(coop with exploration rate of 0). These results are shown on Panel B of Fig. \ref{fig:fig_balance}. Coop converges quickly when the exploration rate is $0.1$ but exhibits large oscillations when compared with G-coop~(no exploration). Furthermore, note that coop~(0.1) reaches optimal very fast(as fast as the case when the exploration rate of $1$ is used) and is stable. Large exploration rate often provides fast convergence but incurs large oscillation as observed in Panel B of Fig. \ref{fig:fig_balance} and is unstable. It is commonly known that a right balance between exploration and exploitation improves the performance of a reinforcement learning methodology. The behavior of coop with respect to exploration rates further substantiates this idea. Implicit exploration improved convergence and the intelligent choice of exploration rate provides methodical exploration.

%================ Subsec=================================
\noindent \textbf{Different Buffer Sizes:} We compare coop with EDQL for various buffer sizes in panels C-F. To this end, we choose buffer sizes to be 1000, 2000, 3000 and 4000 where the markers indicate the mean and the lines describe the error band. The best performance for both EDQL and coop can be observed when the buffer size is chosen to be 4000. However, a steady deterioration in performance can be seen while moving from panels F to C~(i.e., when the buffer size is reduced from 4000 to 1000).   However, the deterioration in the performance of EDQL is much larger relative to coop. The results indicate that that the use of the second NN provides convergence benefits. Consider now Panel B where the trend for coop~(this curve is shown in Panel A as well) is consistently above that for EDQL. The use of a second neural network therefore allows us to reduce this memory load by providing good performance for smaller buffer sizes. Thus, the coop strategy is also sample efficient. In these simulations, we have demonstrated that the our approach is sample efficient, provides faster convergence and improves convergence.

% ==============================================================================
% ==========================CONCLUSIONS====================================
\section{Conclusions}\label{sec: conclusions}
% ==============================================================================
In this paper, a two DNN-driven learning-based control strategy is presented to synthesize feedback policies to steer systems generating feedback data in the form of images. Our approach provides the following advantages when compared to the traditional DQN-based RL strategy: (1) the two DNNs reduce the need for large buffer sizes, (2) the coop strategy improves convergence time for  learning the $Q$-network. We also demonstrate the convergence of EDL algorithm. Despite significant improvements in comparison with the DQN, major challenges are still present in the context of employing the proposed approach for synthesizing learning-based online controls to steer the system as desired. For instance, a typical episodic RL setup is considered in this paper, where the RL agent learns iteratively. Secondly, the proposed strategy only considers discrete action space with only finitely many feasible control inputs. %In real-life applications,  failure are unacceptable as they can lead to enormous loss of revenue and worse, the loss of human life.

% The maximum paper length is 8 pages excluding references and acknowledgements, and 10 pages including references and acknowledgements
% \clearpage
% The acknowledgments are automatically included only in the final and preprint versions of the paper.
% \acknowledgments{If a paper is accepted, the final camera-ready version will (and probably should) include acknowledgments. All acknowledgments go at the end of the paper, including thanks to reviewers who gave useful comments, to colleagues who contributed to the ideas, and to funding agencies and corporate sponsors that provided financial support.}

% no \bibliographystyle is required, since the corl style is automatically used.
%%%%%%%%%%%%%%%%%%%%%%%%%%%%%%%%%%%%%%%%%%%%%%%%%%%%%%%%%%
\bibliographystyle{IEEEtran}
\bibliography{CoRL_Refs}  % .bib

\end{document}